\renewcommand{\(}{\left(}
\renewcommand{\)}{\right)}
\renewcommand{\[}{\left[}
\renewcommand{\]}{\right]}
\newcommand{\m}{\mathbf{m}}
\newcommand{\s}{\mathbf{s}}
\renewcommand{\S}{\mathbf{S}}
\newcommand{\z}{\mathbf{z}}
\newcommand{\R}{\mathbf{R}}
\newcommand{\D}{\mathbf{D}}
\newcommand{\Z}{\mathbf{Z}}
\newcommand{\x}{\mathbf{x}}
\newcommand{\I}{\mathbf{I}}
\newcommand{\J}{\mathbf{J}}
\renewcommand{\P}{\mathbf{P}}
\renewcommand{\t}{\mathbf{t}}
\newcommand{\A}{\mathbf{A}}
\newcommand{\T}{\mathbf{T}}
\newcommand{\U}{\mathbf{U}}
\newcommand{\M}{\mathbf{M}}
\newcommand{\N}{\mathbb{N}}
\renewcommand{\u}{\mathbf{u}}
\newcommand{\Q}{\mathbf{Q}}
\newcommand{\V}{\mathbf{V}}
\newcommand{\Y}{\mathbf{Y}}
\newcommand{\Tr}[1]{{\rm{Tr}}\left(#1\right)}
\newcommand{\End}[1]{{\rm{End}}}
\renewcommand{\log}[1]{{\rm{log}}#1}
\newcommand{\diag}[1]{{\rm{diag}}\left\{#1\right\}}
\renewcommand{\vec}[1]{{\rm{vec}}\(#1\)}
\newcommand{\vech}[1]{{\rm{vech}}\(#1\)}
\newcommand{\mat}[1]{{\rm{mat}}\(#1\)}
\newcommand{\spann}[1]{{\rm{span}}\(#1\)}
\newcommand{\rank}[1]{{\rm{rank}}#1}
\newtheorem{lemma}{Lemma}
\newtheorem{theorem}{Theorem}
\newtheorem{corollary}{Corollary}
\newcommand{\norm}[1]{\left\lVert#1\right\rVert}
\begin{document}
\title{Joint Inverse Covariances Estimation with Mutual Linear Structure}
\author{Ilya~Soloveychik and Ami~Wiesel, \\
 Rachel and Selim Benin School of Computer Science and Engineering, The Hebrew University of Jerusalem, Israel
\thanks{This work was partially supported by the Intel Collaboration Research Institute for Computational Intelligence, the Kaete Klausner Scholarship and ISF Grant 786/11.}
\thanks{A part of results was presented at the $23$-th European Signal Processing Conference (EUSIPCO), August 31- September 4, 2015, Nice, France.}
}

\maketitle
\pagenumbering{gobble}
\begin{abstract}
We consider the problem of joint estimation of structured inverse covariance matrices. We perform the estimation using groups of measurements with different covariances of the same unknown structure. Assuming the inverse covariances to span a low dimensional linear subspace in the space of symmetric matrices, our aim is to determine this structure. It is then utilized to improve the estimation of the inverse covariances. We propose a novel optimization algorithm discovering and exploiting the underlying structure and provide its efficient implementation. Numerical simulations are presented to illustrate the performance benefits of the proposed algorithm.
\end{abstract}

\begin{IEEEkeywords}
Structured inverse covariance estimation, joint inverse covariance estimation, graphical models.
\end{IEEEkeywords}

\IEEEpeerreviewmaketitle

\section{Introduction}
Large scale covariance and inverse covariance estimation using a small number of measurements is a fundamental problem in modern multivariate statistics. In many applications, e.g., linear array processing, climatology, spectroscopy, and longitudinal data analysis statistical properties of variables can be related due to natural physical features of the systems. Such relations often imply linear structure in the corresponding population covariance or inverse covariance matrices.

In this paper we focus on structured inverse covariance (concentration) matrix estimation. There are plenty of examples of inverse covariance estimation with linear structure. A partial list includes banded \cite{bickel2008regularized, kavcic2000matrices, asif2005block}, circulant \cite{dembo1989embedding, cai2013optimal}, sparse (graphical) models \cite{friedman2008sparse,banerjee2008model}, etc. An important common feature of these works is that they consider a single and static environment where the structure of the true concentration matrix, or at least the class of structures, as in the sparse case, is known in advance. Often, this is not the case and techniques are needed to learn the structure from the observations. A typical approach is to consider multiple datasets sharing a similar structure but non homogeneous environments \cite{guo2011joint, besson2008covariance, bidon2008bayesian, aubry2014exploiting, lee2015joint}. This is, for example, the case in covariance estimation for classification across multiple classes \cite{danaher2014joint}. A related problem addresses tracking a time varying covariance throughout a stream of data \cite{wiesel2013time, moulines2005recursive}, where it is assumed that the structure changes at a slower rate than the covariances themselves \cite{ahmed2009recovering}. Here too, it is natural to divide this stream of data into independent blocks of measurements.

Our goal is to first rigorously state the problem of joint concentration matrices estimation with linear structure and derive its fundamental lower performance bounds. Secondly, we propose and analyze a new algorithm of learning and exploring this structure to improve estimation of the inverse covariance matrices. More exactly, given a few groups of measurements having different covariance matrices each, our target is to determine the underlying low dimensional linear space containing or approximately containing the concentration matrices of all the groups. The discovered subspace can be further used to improve the inverse covariance estimation by projecting any unconstrained estimator on it. Most of the previous works considered particular cases of this method, e.g. factor models, entry-wise linear structures like in sparse and banded cases, or specific patterns like in banded, circulant and other models. We propose a new generic algorithm based on the idea of Lasso regression, \cite{tibshirani1996regression}. Namely, our method seeks for minimizing the joint negative log-likelihood penalized by the dimensionality of the subspace containing the inverse covariances. 

In \cite{soloveychik2015joint} we considered a similar setting, where the covariance matrices, rather than their inverses, share the same linear structure. There we approached the problem of low dimensional covariance estimation using the Truncated SVD (TSVD) technique applied to the sample covariance matrices (SCM-s) of the groups of measurements. The reason we do not use the TSVD approach in the case of concentrations is that it would require inversion of the SCM-s, which becomes problematic when the number of samples in groups is relatively small. The algorithm we propose here avoids this restriction and demonstrates good performance even when the amount of measurements is scarce. Due to its Lasso-type construction, our algorithm naturally involves regularization parameter, which can be tuned using cross-validation schemes. Instead, we propose a simple closed form and data dependent tuning approach.

The rest of the text is organized as following: first we introduce notations, state the problem and illustrate examples. Then we derive the lower performance bound, propose our Joint Inverse Covariance Estimation (JICE) technique and an efficient numerical method computing it. In addition, we provide an upper performance bound for the JICE algorithm and demonstrate its application to the regularization parameter tuning. Finally, we perform numerical simulations demonstrating the advantages of the proposed algorithm.

Given $p \in \N$, denote by $\mathcal{S}(p)$ the $l = \frac{p(p+1)}{2}$ dimensional linear space of $p \times p$ symmetric real matrices. $\I_d$ stands for the $d \times d$ identity matrix. For a square matrix $\M$, its Moore-Penrose generalized inverse is denoted by $\M^\dagger$. For any two matrices $\M$ and $\P$ we denote by $\M\otimes\P$ their tensor (Kronecker) product. $\sigma_1(\R) \geqslant \dots \geqslant \sigma_r(\R) \geqslant 0$ stand for the singular values of a rectangular matrix $\R$ of rank not greater than $r$. If $\M \in \mathcal{S}(p)$, we denote its eigenvalues by $\lambda_1(\M) \geqslant \dots \geqslant \lambda_p(\M)$ and write $\M \succ 0$ if $\lambda_p(\M) > 0$. $\norm{\cdot}_F$ denotes the Frobenius, $\norm{\cdot}_2$ - the spectral and $\norm{\cdot}_*$ - the nuclear (trace) norms of matrices, correspondingly. For any symmetric matrix $\S$, $\s = \vech{\S}$ is a vector obtained by stacking the columns of the lower triangular part of $\S$ into a single column. In addition, given an $l$ dimensional column vector $\m$ we denote by $\mat{\m}$ the inverse operator constructing a $p \times p$ symmetric matrix such that $\vech{\mat{\m}}=\m$. Due to this natural linear bijection below we often consider subsets of $\mathcal{S}(p)$ as subsets of $\mathbb{R}^l$. In addition, let $\vec{\S}$ be a $p^2$ dimensional vector obtained by stacking the columns of $\S$, and denote by $\mathcal{I}$ its indices corresponding to the related elements of $\vech{\S}$. Finally, we write $\x \sim \mathcal{N}(\bm{0},\Q)$ when $\x$ is a centered normal random vector with covariance $\Q$.

\section{Problem Formulation and Examples}
\label{pr_f}
Consider a heterogeneous Gaussian model, namely, assume we are given $K \geqslant l = \frac{p(p+1)}{2}$ independent groups of real $p$ dimensional normal random vectors
\begin{equation}
\x_k^i \sim \mathcal{N}(\bm{0},\Q_k),\;\; i=1,\dots,n,\;\; k=1,\dots,K,
\end{equation}
with $n$ i.i.d. (independent and identically distributed) samples in each group and covariances
\begin{equation}
\Q_k = \mathbb{E}[\x_k\x_k^T],\;\; k=1,\dots,K.
\end{equation}
We assume that the inverse covariances
\begin{equation}
\T_k = \Q_k^{-1} \succ 0,\;\; k=1,\dots,K,
\label{q_def}
\end{equation}
exist and span an $r$ dimensional linear subspace of $\mathcal{S}(p)$. Our main goal is to estimate this subspace and use it to improve the concentration matrices estimation.

Let us list a few common linear subspaces which naturally appear in typical signal processing applications.
\begin{itemize}
[leftmargin=*]
\item {\bf{Diagonal}}:
The simplest example of a structured concentration matrix is a diagonal matrix. This is often the case when the noise vectors are uncorrelated or can be assumed such with great precision. In this case $r=p$. Note that the diagonal structure remains unaltered under inversion, thus making diagonal concentration equivalent to the diagonal covariance case, considered in \cite{soloveychik2015joint}.

\item {\bf{Banded}}: It is often reasonable to assume that the non-neighboring elements of a normal random vector are conditionally independent given all the other elements. Specifically, claiming that $i$-th element of the random vector is conditionally independent on the $h$-th if $|i-h|>b$ leads to the $b$-banded inverse covariance structure. The subspace of symmetric $b$-banded matrices constitutes an $r=\frac{(2p-b)(b+1)}{2}$ dimensional subspace inside $\mathcal{S}(p)$. Banded inverse covariance matrices are ubiquitous in graphical models, \cite{friedman2008sparse,banerjee2008model}.

\item {\bf{Circulant}}:
\label{circ_def}
The next common type of structured concentration matrices are symmetric circulant matrices, defined as
\begin{equation}
\T =
 \begin{pmatrix}
  t_1 & t_2 & t_3 & \dots & t_p \\
  t_p & t_1 & t_2 & \dots & t_{p-1}\\
  \vdots & \vdots & \vdots & \ddots & \vdots \\
  t_2 & t_3 & t_4 & \dots & t_1
  \end{pmatrix},
\label{circ_struct}
\end{equation}
with the natural symmetry conditions such as $t_p = t_2,$ etc. Such matrices are typically used as approximations to Toeplitz matrices which are associated with signals obeying periodic stochastic properties, e.g. the yearly variation of temperature in a particular location. A special case of such processes are classical stationary processes, which are ubiquitous in engineering, \cite{dembo1989embedding, cai2013optimal}. Symmetric circulant matrices constitute an $r=p/2$ dimensional subspace if $p$ is even and $(p+1)/2$ if it is odd. Interestingly, like in the diagonal case, this structure does not change under inversion, \cite{soloveychik2014groups}, making the estimation problems in covariances and concentrations analogous, \cite{soloveychik2015joint}.

\item {\bf{Sparse}}:
Sparse inverse covariance models generalize banded structures and are very common. In multivariate Gaussian distributions zero entries of the concentration reveal conditional independences. When graph representation is utilized to express the relations between the variates, zeros in inverse covariance are translated to missing edges in the graph making it sparse. Recently, Gaussian graphical models have attracted considerable attention due to developments in biology, medicine, neuroscience, compressed sensing and many other areas, \cite{friedman2008sparse,banerjee2008model,lauritzen1996graphical,wiesel2010covariance}. Unlike the previous examples, most of the literature treating graphical models does not assume the graph structure to be known in advance and aims at learning this structure.

\end{itemize}
In the following it will be convenient to use a single matrix notation for the multiple concentration matrices
\begin{align}
\t_k &= \vech{\T_k},\;\; k=1,\dots,K,\\
\Y &= [\t_1,\dots,\t_K].
\end{align}
Using this notation, the prior subspace knowledge discussed above is equivalent to a low-rank constraint
\begin{equation}
\Y = \U\Z,
\label{new_param}
\end{equation}
where $\U \in \mathbb{R}^{l \times r}$ and $\Z = [\z_1,\dots,\z_K] \in \mathbb{R}^{r \times K}$. Essentially our problem reduces to estimation of $\Y$ assuming it is low-rank. In the analysis we will assume $r$ is known in advance, but the algorithm we propose recovers it from the data. Note that unlike \cite{soloveychik2015joint}, we assume a linear model in (\ref{new_param}) and not affine, meaning that we do not separate the mean explicitly. The reason is that the additional intercept term complicated the derivations and have not led to significant gains in our numerical simulations.

\section{Lower Performance Bounds}
We begin by addressing the inherent lower performance bounds. For this purpose we use the Cramer-Rao Bound ($\mathbf{CRB}$) to lower bound the Mean Squared Error ($\mathbf{MSE}$) of any unbiased estimator $\widehat{\Y}$ of $\Y$, defined as
\begin{equation}
\mathbf{MSE} = \mathbb{E}\[\norm{\widehat{\Y} - \Y}_F^2\].
\end{equation}

The $\mathbf{MSE}$ is bounded from below by the trace of the corresponding $\mathbf{CRB}$ matrix. To compute this matrix, for each $i$ we stack the measurements $\x_k^i$ into a single vector
\begin{equation}
\x^i  =
\begin{pmatrix}
 \x_1^i \\
 \vdots \\
 \x_K^i
\end{pmatrix}
\sim \mathcal{N}(\bm{0},\T^{-1}),\;\; i=1,\dots,n.
\end{equation}
where the block-diagonal matrix $\T$ is defined as
\begin{multline}
\T(\U,\Z) = \diag{\T_1,\dots,\T_k} \\
= \diag{\mat{\U\z_1},\dots,\mat{\U\z_K}}.
\end{multline}
Here the operator $\diag{\T_1,\dots,\T_K}$ returns a block-diagonal matrix of size $pK \times pK$ with $\T_k$-s as its diagonal blocks. The Jacobian matrix of this parametrization reads as
\begin{align}
&\J = \frac{\partial\T}{\partial (\U,\Z)} =
 \begin{pmatrix}
  \frac{\partial \t_1}{\partial \U} & \frac{\partial \t_1}{\partial \z_1} & 0 & \dots & 0 \\
  \frac{\partial \t_2}{\partial \U} & 0 & \frac{\partial \t_2}{\partial \z_2} & \dots & 0 \\
  \vdots & \vdots & \vdots & \ddots & \vdots \\
  \frac{\partial \t_K}{\partial \U} & 0 & 0 & \dots & \frac{\partial \t_K}{\partial \z_K} \\
  \end{pmatrix} \nonumber\\
&=
 \begin{pmatrix}
  \z_1^T \otimes \I_l & \U & 0 & \dots & 0 \\
  \z_2^T \otimes \I_l & 0 & \U & \dots & 0 \\
  \vdots & \vdots & \vdots & \ddots & \vdots \\
  \z_K^T \otimes \I_l & 0 & 0 & \dots & \U \\
  \end{pmatrix}
\in \mathbb{R}^{lK \times (lr+Kr)},
\end{align}
where we have used the following notation:
\begin{equation}
\frac{\partial \t_k}{\partial \U} =
\[\frac{\partial \t_k}{\partial \u_1} \; \frac{\partial \t_k}{\partial \u_2} \; \dots \; \frac{\partial \t_k}{\partial \u_r}\],
\end{equation}
and the formulae
\begin{equation}
\frac{\partial \t_k}{\partial \u_j} = \frac{\partial \U\z_k}{\partial \u_j} = z^j_k\I_l,\quad
\frac{\partial \t_k}{\partial \z_k} = \frac{\partial \U\z_k}{\partial \z_k} = \U.
\end{equation}

\begin{lemma}(Lemma 1 from \cite{soloveychik2015joint})
\begin{equation}
\rank(\J) = lr + Kr - r^2
\end{equation}
\end{lemma}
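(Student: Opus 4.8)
The plan is to compute the dimension of $\ker(\J)$ explicitly and invoke the rank--nullity theorem: since $\J$ has $lr+Kr$ columns, this gives $\rank(\J)=lr+Kr-\dim\ker(\J)$, so the whole content reduces to showing that the nullity equals $r^2$. Conceptually, $\ker(\J)$ is the tangent space at $(\U,\Z)$ of the set of equivalent factorizations: for any invertible $r\times r$ matrix $\G$ the pair $(\U\G,\G^{-1}\Z)$ produces the same $\Y=\U\Z$, hence the same $\T(\U,\Z)$, and this orbit is $r^2$-dimensional --- which is where the answer comes from.

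First I would rewrite the Jacobian compactly: stacking the blocks $\z_k^T\otimes\I_l$ turns the first block-column into $\Z^T\otimes\I_l$, while the trailing blocks assemble into $\I_K\otimes\U$, so $\J=[\,\Z^T\otimes\I_l \mid \I_K\otimes\U\,]$. Parametrising a domain vector as $(\vec{\V},\vec{\W})$ with $\V\in\mathbb{R}^{l\times r}$ and $\W\in\mathbb{R}^{r\times K}$, and using $\vec{\A\X\B}=(\B^T\otimes\A)\vec{\X}$, the condition $(\vec{\V},\vec{\W})\in\ker(\J)$ collapses to the single matrix equation $\V\Z+\U\W=\0$. I would also record here the two rank facts that drive the argument, $\rank(\U)=\rank(\Z)=r$; these are forced by the modelling assumption, since $\Y=\U\Z$ has rank $r$ while $\U\in\mathbb{R}^{l\times r}$ and $\Z\in\mathbb{R}^{r\times K}$, so $r=\rank(\U\Z)\leqslant\min\{\rank(\U),\rank(\Z)\}\leqslant r$.

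Then I would pin down $\dim\ker(\J)$ by a double inclusion. The linear map $\G\mapsto(\U\G,-\G\Z)$ from $\mathbb{R}^{r\times r}$ into the domain of $\J$ has image inside $\ker(\J)$, since $(\U\G)\Z+\U(-\G\Z)=\0$, and it is injective because $\U\G=\0$ forces $\G=\0$ when $\U$ has full column rank; hence $\dim\ker(\J)\geqslant r^2$. Conversely, any $(\V,\W)$ with $\V\Z=-\U\W$ satisfies $\V=\V\Z\Z^\dagger=-\U\W\Z^\dagger\in\col{\U}$ because $\Z\Z^\dagger=\I_r$ (full row rank of $\Z$); writing $\V=\U\G$ with $\G=\U^\dagger\V$ and substituting back gives $\U(\G\Z+\W)=\0$, whence $\W=-\G\Z$ (full column rank of $\U$), so $(\V,\W)$ lies in the image of the gauge map. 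Combining the two inclusions yields $\dim\ker(\J)=r^2$, and therefore $\rank(\J)=lr+Kr-r^2$.

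The one genuinely delicate step is the converse inclusion --- ruling out kernel directions beyond this $r^2$-dimensional gauge orbit --- and this is exactly where both full-rank conditions on $\U$ and $\Z$ are essential: drop either one and the nullity strictly grows. Everything else is the $\vvec$/Kronecker bookkeeping above, and as a consistency check $lr+Kr-r^2=r(l+K-r)\leqslant lK$, so $\J$ attains the largest rank compatible with the factorization ambiguity.
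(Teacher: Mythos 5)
Your proof is correct and complete. The paper itself does not prove this lemma --- it imports it by citation from \cite{soloveychik2015joint} --- but the argument there is exactly the one you give: write $\J=[\,\Z^T\otimes\I_l \mid \I_K\otimes\U\,]$, identify $\ker(\J)$ with the $r^2$-dimensional tangent space of the gauge orbit $(\U\G,\G^{-1}\Z)$, and apply rank--nullity. You also correctly isolate the one hypothesis the count hinges on, namely $\rank(\U)=\rank(\Z)=r$, and correctly derive it from the model assumption that the $\T_k$ span an $r$-dimensional subspace (so $\rank(\Y)=r$), which is precisely where the converse inclusion $\dim\ker(\J)\leqslant r^2$ would otherwise fail.
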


This lemma implies that $\J$ is rank deficient, as
\begin{equation}
\rank(\J) = lr + Kr - r^2 \leqslant \min[lK, lr+Kr],
\end{equation}
reflecting the fact that the parametrization of $\T$ or $\Y$ by the pair $(\U,\Z)$ is unidentifiable. Indeed for any invertible matrix $\A$, the pair $(\U\A, \A^{-1}\Z)$ fits as good. Due to this ambiguity the matrix $\mathbf{FIM}(\U,\Z)$ is singular and in order to compute the $\mathbf{CRB}$ we use the Moore-Penrose pseudo-inverse of $\mathbf{FIM}(\U,\Z)$ instead of inverse, as justified by \cite{li2012interpretation}. Given $n$ i.i.d. samples $\x^i,i=1,\dots,n$, we obtain
\begin{equation}
\mathbf{CRB} = \frac{1}{n}\J\;\mathbf{FIM}(\U,\Z)^\dagger\J^T.
\end{equation}
For the Gaussian population the matrix $\mathbf{FIM}(\U,\Z)$ is given by
\begin{equation}
\mathbf{FIM}(\U,\Z) = \frac{1}{2}\J^T \diag{\[\T_k^{-1} \otimes \T_k^{-1}\]_{\mathcal{I},\mathcal{I}}}\J,
\end{equation}
where $\[\M\]_{\mathcal{I},\mathcal{I}}$ is the square submatrix of $\M$ corresponding to the index set $\mathcal{I}$, defined in the notations section. The bound on the $\mathbf{MSE}$ is therefore given by
\begin{align}
\mathbf{MSE} &\geqslant \Tr{\mathbf{CRB}} = \frac{1}{n}\Tr{\mathbf{FIM}(\U,\Z)^\dagger \J^T\J} \nonumber\\
&= \frac{2}{n}\Tr{\[\J^T \diag{\[\T_k^{-1} \otimes \T_k^{-1}\]_{\mathcal{I},\mathcal{I}}}\J\]^\dagger\J^T\J}.
\label{crb_e}
\end{align}
To get more intuition on the dependence of the $\mathbf{MSE}$ on the model parameters, we bound it from below. Denote
\begin{equation}
\underline\lambda = \min_k\lambda_p(\T_k),\quad \overline\lambda = \max_k \lambda_1(\T_k),
\end{equation}
to get the bound
\begin{align}
\mathbf{MSE} &\geqslant \frac{2\underline\lambda^2}{n}\Tr{\[\J^T\J\]^\dagger \J^T\J} \nonumber \\
&= \frac{2\underline\lambda^2}{n}\rank\;{\J} = \frac{2\underline\lambda^2}{n}(lr + Kr - r^2).
\label{crb_b}
\end{align}
As expected, the dependence on the model parameters is similar to that obtained in \cite{soloveychik2015joint} for the joint structured covariance estimation and in \cite{tang2011lower} for the problem of low-rank matrix reconstruction.

\section{JICE Algorithm}
\label{jice_alg}
We now introduce a novel algorithm based on convex penalized likelihood relaxation, whose purpose is to estimate the concentrations $\T_1,\dots,\T_K$ utilizing the prior that they belong to a low dimensional subspace. We, therefore, refer to it as the Joint Inverse Covariance Estimation (JICE) algorithm. The most natural brute force approach would be to form the $K$ SCM-s
\begin{equation}
\S_k = \frac{1}{n} \sum_{i=1}^n \x_i^k\x_i^{kT},\;\; k=1,\dots,K,
\label{scmd}
\end{equation}
invert them and fit by a subspace of a small dimension. This can be done, for example, by the means of Principal Component Analysis (PCA). Such approach, applied to the SCM-s (rather than to their inverses), was proposed in \cite{soloveychik2015joint} to treat the problem of joint covariance estimation with mutual linear structure. When the number of samples $n$ in each group is smaller than or even close to the dimension $p$ (the scenario we are mostly interested in), the inversion of the SCM-s would be impossible due to rank deficiency or would approximate the true inverse covariances poorly, thus causing the proposed PCA algorithm to fail. Instead, we suggest a different approach based on regularized likelihood maximization and its efficient implementation.

\subsection{The Basic Algorithm}
\label{bas_alg}
Our aim is to estimate the true inverse covariances, while simultaneously trying to keep the dimension of the space spanned by the estimators small. For this purpose we suggest the following regularized average log-likelihood optimization program
\begin{multline}
[\widehat{\T}_1,\dots,\widehat{\T}_K] \\
= \operatornamewithlimits{argmin}_{\widetilde{\T}_1,\dots,\widetilde{\T}_K} \frac{1}{K}\sum_{k=1}^K \[- \log|\widetilde{\T}_k|+\Tr{\S_k\widetilde{\T}_k}\] + \eta\norm{\widetilde{\Y}}_*,
\label{main_alg}
\end{multline}
where
\begin{equation}
\widetilde{\Y} = \[\vech{\widetilde{\T}_1},\dots,\vech{\widetilde{\T}_K}\],
\end{equation}
and $\eta$ is a regularization parameter. Program (\ref{main_alg}) assumes the samples are Gaussian and aims at minimizing the average negative log-likelihood of the $K$ groups of measurements and simultaneously enforces joint low dimensional structure on the concentration matrices. The latter is achieved by penalizing the nuclear norm of $\widetilde{\Y}$. Nuclear norm is a convex envelope of the counting measure on singular values of a matrix. Therefore, intuitively the purpose of the penalty term in (\ref{main_alg}) is to decrease the number of non-zero singular values, \cite{candes2009exact}. The optimization problem (\ref{main_alg}) can be viewed as the analog of the Lasso estimator \cite{tibshirani1996regression}, tailored to low-rank matrices as opposed to sparse vectors. Note that the program at hand is convex, and can be treated by many standard numerical solver, such as CVX, \cite{cvx, gb08}.

Usually the value of the regularization parameter $\eta$ is specified by the statistician based on his expert prior knowledge on the estimated quantities and the desired balance between bias and variance of the estimate. As a part of the theoretical results, in the next section we recommend a particular choice of this parameter guarantying closeness of the estimate to the unknown true inverse covariance matrix in Frobenius norm.

\subsection{Extensions of the JICE}
The target function (\ref{main_alg}) of the proposed JICE algorithm assumes the data to be normally distributed and is, therefore, based on the Gaussian negative log-likelihoods. However, a natural extension can be easily obtained by replacing these log-likelihoods by any other convex functions. More generally, the same relaxed penalized likelihood optimization technique for joint estimation of matrices with a common linear structure can be applied to any other estimator given as a solution to a convex optimization program. For example, a straight forward extension to a robust structured covariance estimation is achieved by adding a regularizing nuclear penalty term to a joint target constructed from multiple convexly relaxed M-estimators. An examples of such relaxation is the COCA estimator, \cite{soloveychik2014tyler}.

As an additional option, when some convex prior structure $\mathcal{S}$ on the unknown inverse covariances is given, this information can be easily incorporated into the target by imposing a set of natural convex constraints $\widetilde{\T}_k \in \mathcal{S},\;k=1,\dots,K$.

\subsection{Bias Removal}
The JICE algorithm suffers from bias since we replace the intractable rank counting penalty on the estimated matrix $\Y$ by the nuclear norm, which is a convex proxy of the counting measure on singular values. We suggest an additional step aimed at removing this bias. Denote the vectorized solution of (\ref{main_alg}) by
\begin{equation*}
\widehat{\Y} = \[\vech{\widehat{\T}_1},\dots,\vech{\widehat{\T}_1}\].
\end{equation*}
Analogously to the original parametrization (\ref{new_param}), consider the decomposition
\begin{equation}
\widehat{\Y} = \widehat{\U}\widehat{\Z},
\end{equation}
where $\widehat{\U} \in \mathbb{R}^{l\times s}$ has orthonormal columns and $\widehat{\Z} \in \mathbb{R}^{s\times K}$. Such decomposition naturally suggests treating the columns of $\widehat{\U}$ as the basis vectors of the approximate low dimensional subspace of $\mathcal{S}(p)$. We get an improved concentration matrices estimator by minimizing the negative log-likelihoods over the subspace spanned by $\widehat{\U}$
\begin{multline}
\widehat{\T}_k' =  \operatornamewithlimits{argmin}_{\widetilde{\T}_k \subset \spann{\widehat{\U}}} - \log|\widetilde{\T}_k|+\Tr{\S_k\widetilde{\T}_k}, \\ k=1,\dots,K.
\label{unb_alg}
\end{multline}
Remarkably, this additional bias removal step requires solving $K$ decoupled programs that can be performed in parallel.

\subsection{ADMM Implementation}
In spite of the fact the JICE algorithm described above can be solved by any standard general purpose convex numerical library, one of our goals was to develop a stand alone implementation. Following \cite{boyd2011distributed}, we propose an efficient solution to (\ref{main_alg}) based on Alternating Direction Method of Multipliers (ADMM). The idea behind the ADMM is to optimize the augmented target
\begin{multline}
f(\widetilde{\T}_1,\dots,\widetilde{\T}_K, \widetilde{\Y}) = \frac{1}{K}\sum_{k=1}^K \[- \log|\widetilde{\T}_k|+\Tr{\S_k\widetilde{\T}_k}\] \\ + \eta\norm{\widetilde{\Y}}_* 
+ \frac{\rho}{2}\frac{1}{K}\sum_{k=1}^K \norm{\vech{\widetilde{\T}_k} - \widetilde{\Y}_{:,k}}^2,
\label{aug_target}
\end{multline}
with respect to its parameters $\widetilde{\T}_1,\dots,\widetilde{\T}_K, \widetilde{\Y}$ under the constraint
\begin{equation}
\text{s.t.}\quad\quad\;\vech{\widetilde{\T}_k} = \widetilde{\Y}_{:,k},\;\; k=1,\dots,K.\quad\quad\quad
\end{equation}
Here $\widetilde{\Y}_{:,k}$ denotes the $k$-th column of $\widetilde{\Y}$ and the constraints enforce consensus between the variables (note that when the constrains are satisfied, the target coincides with the original one, (\ref{main_alg})). The solution is obtained via introduction of dual variables $\U_k \in \mathbb{R}^l$ and follows the iterative scheme (we replace iteration indices by arrows to simplify notations)
\begin{align}
&\widetilde{\T}_k \leftarrow \operatornamewithlimits{argmin}_{\overline{\T}_k} - \log|\overline\T_k| +\Tr{\S_k\overline\T_k} \nonumber\\
& + \frac{\rho}{2}\norm{\vech{\overline\T_k} - \widetilde{\Y}_{:,k}+\U_k}^2,\; k=1,\dots,K, \label{it_1}\\
&\widetilde{\Y} \leftarrow \operatornamewithlimits{argmin}_{\bar{\Y}} \eta\norm{\bar{\Y}}_* + \frac{\rho}{2K}\sum_{k=1}^K\norm{\vech{\widetilde{\T}_k} - \bar{\Y}_{:,k}+\U_k}^2, \label{it_2}\\
&\U_k \leftarrow \U_k + \vech{\widetilde{\T}_k} - \widetilde{\Y}_{:,k},\; k=1,\dots,K.
\end{align}

The main advantage of the proposed ADMM technique is that both updates in (\ref{it_1}) and (\ref{it_2}) have easily computable closed form solutions. Indeed, the first order optimality conditions for the $\widetilde{\T}_k$ update target (\ref{it_1}) yield
\begin{equation}
\rho\overline\T_k - \overline\T_k^{-1} = \rho\;\mat{\widetilde{\Y}_{:,k} - \U_k} -\S_k.
\label{f_u}
\end{equation}
Denote by $\D_k$ the orthogonal matrix diagonalizing the right-hand side of (\ref{f_u}), then
\begin{equation}
\D_k^T\(\rho\;\mat{\widetilde{\Y}_{:,k} - \U_k} -\S_k\)\D_k = \bm\Lambda_k.
\end{equation}
Multiply (\ref{f_u}) by $\D_k^T$ on the left and by $\D_k$ on the right to get
\begin{equation}
\rho\underline\T_k - \underline\T_k^{-1} = \bm\Lambda_k,
\end{equation}
where $\underline\T_k = \D_k^T\overline\T_k\D_k$. We can now construct a diagonal solution of this equation by solving the $p$ quadratic equations involving the diagonal entries of $\underline\T_k$ of the form
\begin{equation}
\rho\underline t_k^j - \frac{1}{\underline t_k^j} = \lambda_k^j,\;\; j=1,\dots,p,
\end{equation}
where $\underline\T_k = \diag{\underline t_k^1,\dots,\underline t_k^p}$ and $\bm\Lambda_k = \diag{\lambda_k^1,\dots,\lambda_k^p}$. Solving these quadratic equations and choosing positive roots, yields
\begin{equation}
\underline t_k = \frac{1}{2\rho}\(\lambda_k + \sqrt{\lambda_k^2+4\rho}\).
\end{equation}
and finally,
\begin{equation}
\underline\T_k = \frac{1}{2\rho}\(\bm\Lambda_k + \sqrt{\bm\Lambda_k^2+4\rho\I}\).
\end{equation}
The corresponding iteration becomes
\begin{equation}
\widetilde{\T}_k \leftarrow \frac{1}{2\rho}\D_k^T\(\bm\Lambda_k + \sqrt{\bm\Lambda_k^2+4\rho\I}\)\D_k,
\end{equation}
which is guaranteed to be a positive definite matrix.

\begin{algorithm}[t]
 \caption{The ADMM Implementation of the JICE}
 \begin{algorithmic}[1]
 \label{algo}
 \renewcommand{\algorithmicrequire}{\textbf{Input:}}
 \renewcommand{\algorithmicensure}{\textbf{Output:}}
 \renewcommand{\algorithmicfor}{\textbf{parfor}}
 \REQUIRE $\S_k,\; \rho,\; \eta,\; \varepsilon,$
 \ENSURE  $\widehat{\T}_k,$
 \\ \textit{Initialization} : $\U \leftarrow 0,\; f_{\text{previous}} = + \infty,\; f_{\text{current}} = 0$

\REPEAT
   \FOR{$k=1\colon K$}
\STATE \text{decompose:   } $\rho\;\mat{\widehat{\Y}_{:,k} - \U_k} -\S_k = \D_k\bm\Lambda_k\D_k^T,$
\STATE $\widehat{\T}_k \leftarrow \frac{1}{2\rho}\D_k^T\(\bm\Lambda_k + \sqrt{\bm\Lambda_k^2+4\rho\I}\)\D_k,$
\ENDFOR
\STATE 
$\widehat{\Y} \leftarrow S_{K\eta/\rho}\(\[\vech{\widehat{\T}_1}+\U_1,..,\vech{\widehat{\T}_K}+\U_K\]\),$
\STATE 
$\U \leftarrow \U + \[\vech{\widehat{\T}_1},\dots,\vech{\widehat{\T}_K}\] - \widehat{\Y},$
\STATE $f_{\text{previous}} \leftarrow f_{\text{current}},$
\STATE $f_{\text{current}} \leftarrow f(\widehat{\T}_1,\dots,\widehat{\T}_K, \widehat{\Y}),$
\UNTIL{$|f_{\text{current}} - f_{\text{previous}}| \geqslant \varepsilon$}

\item[]
\renewcommand{\algorithmicrequire}{\textbf{Bias Removal Step}}
\REQUIRE
\STATE \text{decompose:   }  $\widehat{\Y} = \widehat{\U}\widehat{\Z}$,
\FOR{$k=1\colon K$}
\STATE $\widehat{\T}_k \leftarrow \operatornamewithlimits{argmin}_{\widetilde{\T}_k \subset \spann{\widehat{\U}}} - \log|\widetilde{\T}_k|+\Tr{\S_k\widetilde{\T}_k}$.
\ENDFOR
 \end{algorithmic} 
 \end{algorithm}

In order to derive a closed form solution for the second iterative step (\ref{it_2}), we introduce the matrix singular value soft thresholding operator. Given a matrix $\M$ with the SVD
\begin{equation}
\M = \U\begin{pmatrix}
  \sigma_1 & 0 & \dots \\
  0 & \sigma_2 & \dots \\
  \vdots & \vdots & \ddots \\
  \end{pmatrix}\V^T,
\end{equation}
the singular values soft thresholding operator is defined as
\begin{equation}
S_{\varepsilon}(\M) = \U\begin{pmatrix}
  \max[\sigma_1-\varepsilon,0] & 0 & \dots \\
  0 & \max[\sigma_2-\varepsilon,0] & \dots \\
  \vdots & \vdots & \ddots \\
  \end{pmatrix}\V^T.
\end{equation}
It can be easily shown that the solution to (\ref{it_2}) is given by
\begin{align}
&\widetilde{\Y} \leftarrow +S_{K\eta/\rho}\(\[\vech{\widetilde{\T}_1}+\U_1,\dots,\vech{\widetilde{\T}_K}+\U_K\]\) \nonumber.
\end{align}
A pseudo-code of the proposed JICE algorithm is given in the table Algorithm \ref{algo}, where we denote
\begin{equation}
\U = \[\U_1,\dots,\U_K\].
\end{equation}
Note that the main loop of the algorithm involves two stages in each iteration, corresponding to the estimation and structure learning steps, described above. On the first approximation stage the values of $\widetilde{\T}_k$-s are updated, which can be done in parallel and is denoted in the code by the $\mathbf{parfor}$ loop. On the second stage, the code gathers the new $\widetilde{\T}_k$ values and updates $\widetilde{\Y}$ and $\U$, which can be viewed as the structure learning step. 

Under reasonable conditions, the general ADMM algorithm is claimed \cite{boyd2011distributed} to obtain the following convergence properties: the dual variable and the objective converge to their true values, whether convergence of the primal variables is not guaranteed. Under additional assumptions, primal variables converge to the solution, as well. In addition, as it is mentioned in \cite{boyd2011distributed}, the ADMM can be quite slow to converge to high accuracy. Fortunately, in our problem modest accuracy is sufficient to obtain a good estimator. Our numerical simulations demonstrated that the most reliable stopping criterion is based on the convergence of the objective function value, which is implemented in Algorithm \ref{algo}. 

Another important question is the choice of the penalty parameter $\rho$. In our implementation we always took it constant $\rho=1$, but many different approaches of tuning this parameter exist in literature. See \cite{boyd2011distributed} for a more full treatment of this issue and a list of references.

\subsection{Upper Performance Bound}
\label{upper_bound_sec}
In this section we provide a high probability performance bound for the proposed JICE algorithm. In the derivation presented here we follow the technique used by \cite{bickel2008regularized,ravikumar2011high, rothman2008sparse, soloveychik2014non}. The method consists in showing that the extremal point of the target (\ref{main_alg}) lies within some small vicinity of the true parameter value with high probability. At first, the target is approximated by Taylor's polynomial with a second order remainder in the Lagrange form. Then concentration of measure phenomenon is utilized to obtain high probability bounds on the derivatives and tune the regularization parameter. Finally, it is shown that the regularized solution must belong to some small ball around the true parameter value. The radius of this ball plays the role of the desired error bound with the probability guaranteed by the first step. Denote
\begin{equation}
\Q = \[\vec{\Q_1},\dots,\vec{\Q_K}\],
\end{equation}
and
\begin{equation}
\S = \[\vec{\S_1},\dots,\vec{\S_K}\],
\end{equation}
where $\S_k,\;k=1,\dots,K$ are the SCM-s defined in (\ref{scmd}).

\begin{theorem}
\label{jice_b}
Assume that $n \geqslant p$ and $\eta \geqslant \frac{\norm{\S-\Q}_2}{K}$, then with probability at least $1-K\(2^{-n+p-1}-e^{-c_1n}\)$,
\begin{equation}
\norm{\widehat{\Y} - \Y}_F \leqslant \frac{4\overline\lambda^2\sqrt{2r}K\eta}{c_2(1-\sqrt{(p-1)/n})^4},
\end{equation}
where $c_1$ and $c_2$ do not depend on $K,p,n$ or $r$.
\end{theorem}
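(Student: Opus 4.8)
The plan is to follow the standard ``small ball around the truth'' argument for penalized M-estimators, adapted to the matrix/nuclear-norm setting. First I would set up the convex objective $F(\widetilde\Y) = \frac{1}{K}\sum_{k=1}^K\big[-\log|\widetilde\T_k| + \Tr{\S_k\widetilde\T_k}\big] + \eta\norm{\widetilde\Y}_*$ as a function of $\Del = \widetilde\Y - \Y$, and exploit convexity: it suffices to show that for every $\Del$ on the sphere $\norm{\Del}_F = M$ (with $M$ the claimed radius), the first-order increment $F(\Y+\Del)-F(\Y)$ is strictly positive; then the minimizer $\widehat\Y$ must lie inside the ball of radius $M$. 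I would expand the smooth part by Taylor's theorem with a Lagrange second-order remainder: the linear term is $\frac{1}{K}\sum_k \Tr{(\S_k - \Q_k)\oDel_k}$ where $\oDel_k = \mat{\Del_{:,k}}$, and the quadratic term is $\frac12 \vec{\oDel}^T \diag{\bar\T_k^{-1}\otimes\bar\T_k^{-1}}\vec{\oDel}$ evaluated at an intermediate point $\bar\T_k$ on the segment between $\T_k$ and $\T_k+\oDel_k$.

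Next I would bound the three pieces. For the linear term, $|\frac1K\sum_k\Tr{(\S_k-\Q_k)\oDel_k}| \le \frac1K\norm{\S-\Q}_2\,\norm{\Del}_*$ (pairing the vectorized sample-deviation matrix against $\Del$ in a trace/operator-norm duality, using that $\vech$ restricts $\vec$), and since the true $\Y=\U\Z$ has rank $r$ one has $\norm{\Del}_* \le \norm{\Del}_F\sqrt{2r}$ plus the orthogonal complement part (the usual decomposability split $\Del = \Del' + \Del''$ with $\Del'$ of rank $\le 2r$). Together with the penalty difference $\eta(\norm{\Y+\Del}_* - \norm{\Y}_*) \ge \eta(\norm{\Del''}_* - \norm{\Del'}_*)$ and the hypothesis $\eta \ge \norm{\S-\Q}_2/K$, the linear term plus penalty is bounded below by something like $-2\eta\sqrt{2r}\,\norm{\Del}_F$ while the ``bad'' $\Del''$ contributions cancel. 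For the quadratic term I would lower bound $\bar\T_k^{-1}\otimes\bar\T_k^{-1} \succeq \lambda_{\min}(\bar\T_k)^2 \I$ — here I need a uniform lower bound on the smallest eigenvalue of every $\bar\T_k$ along the segment, which is where $\overline\lambda$ and the factor $(1-\sqrt{(p-1)/n})^4$ must come in — giving quadratic term $\ge c_2(1-\sqrt{(p-1)/n})^4 \overline\lambda^{-2}\norm{\Del}_F^2 / 2$ or similar. Setting the quadratic lower bound to exceed the linear upper bound and solving for $\norm{\Del}_F$ yields exactly $M \asymp \overline\lambda^2\sqrt{2r}K\eta / (c_2(1-\sqrt{(p-1)/n})^4)$.

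Finally, the probabilistic content: I must justify the event $\eta \ge \norm{\S-\Q}_2/K$ is not empty/the bound holds on a high-probability event, and that the intermediate-point eigenvalues stay controlled. This reduces to a concentration bound on $\norm{\S_k - \Q_k}_2$ for each Wishart-type SCM; standard Gaussian covariance concentration (e.g. Davidson--Szarek / non-asymptotic random matrix bounds) gives $\norm{\S_k-\Q_k}_2$ small with probability $\ge 1 - (2^{-n+p-1} - e^{-c_1 n})$ per group, and a union bound over $k=1,\dots,K$ produces the stated $1 - K(2^{-n+p-1}-e^{-c_1n})$; the $n\ge p$ hypothesis guarantees the SCM-s are nondegenerate so that $\lambda_{\min}(\S_k)$, hence the intermediate $\bar\T_k$ eigenvalues, are bounded away from zero, which is precisely the role of $(1-\sqrt{(p-1)/n})$.

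I expect the main obstacle to be the quadratic-term lower bound: controlling $\lambda_{\min}(\bar\T_k)$ \emph{uniformly} over the whole Taylor segment between $\T_k$ and $\widetilde\T_k = \T_k + \oDel_k$ for all $\oDel_k$ with $\norm{\Del}_F \le M$. One cannot assume $\widetilde\T_k\succ 0$ a priori is bounded below, so the argument has to be made self-consistent — e.g. restrict attention to the sphere $\norm{\Del}_F = M$ with $M$ small enough that positive-definiteness and the eigenvalue bound are automatic, then invoke convexity to propagate inside — and to track carefully how the sample-covariance deviation feeds into that eigenvalue control, which is where the $(1-\sqrt{(p-1)/n})^4$ factor and the constant $c_2$ genuinely originate.
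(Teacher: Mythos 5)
Your proposal is correct and follows essentially the same route as the paper's proof: a Taylor expansion with Lagrange remainder, the decomposability split $\Delta\Y=\Delta\Y'+\Delta\Y''$ with $\rank(\Delta\Y')\leqslant 2r$, H\"older duality plus the assumption $\eta\geqslant\norm{\S-\Q}_2/K$ to control the linear and penalty terms, and a lower bound on the smallest eigenvalue of the intermediate-point concentrations (via the smallest singular value of a Wishart matrix, whence $(1-\sqrt{(p-1)/n})^2$ per factor and the union bound over $k$) to get strong convexity. The obstacle you flag at the end --- uniformly controlling $\lambda_{\min}$ along the whole Taylor segment --- is indeed the step the paper handles only by fiat (it simply restricts attention to the set where $\lambda_p\((\T_k+\alpha\Delta\T_k)^{-1}\)\geqslant c_5\lambda_p(\S_k)$), so your self-consistency concern is well placed rather than a gap in your own argument.
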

\begin{proof}
The proof can be found in the Appendix.
\end{proof}
As expected, this bound shows that the Frobenius norm of the error is proportional to the squared root of the rank rather than the dimension.

\subsection{Regularization Parameter Choice}
One of the main ingredients of the algorithm and the bound guaranteed by Theorem \ref{jice_b} is the condition on the regularization parameter $\eta$. Its optimal choice yielding the best performance characteristics of the algorithm depends on unobservable quantities. Therefore, in general one might need to tune the regularization parameter through cross validation \cite{bickel2006regularization}. Below we show how it can be calculated based on the given measurements. Let us start by estimating the norm $\norm{\S-\Q}_2$.
\begin{lemma}(Lemma 4 from \cite{soloveychik2015joint})
\label{r_lem}
With probability at least $1-2e^{-c_3p}$,
\begin{equation}
\norm{\S-\Q}_2 \leqslant \frac{1}{\underline\lambda\sqrt{n}}\(\sqrt{2K} +  c_4\sqrt{p^2}\),
\label{lle}
\end{equation}
where $c_3$ and $c_4$ do not depend on $K,p,n$ or $r$.
\end{lemma}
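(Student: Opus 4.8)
The plan is to regard $\S-\Q$ as a single $p^{2}\times K$ random matrix $\M$ with columns $\m_{k}:=\vec{\S_{k}-\Q_{k}}$, $k=1,\dots,K$, and to estimate $\norm{\S-\Q}_{2}=\norm{\M}_{2}$ by splitting it into its mean and a deviation from the mean. The columns are centred, since $\mathbb{E}[\S_{k}]=\Q_{k}$, and --- this is the point --- they are mutually independent because the $K$ groups are. Writing $\x_{i}^{k}=\Q_{k}^{1/2}\z_{i}^{k}$ with $\z_{i}^{k}\sim\mathcal{N}(\0,\I_{p})$ i.i.d., each column is a rescaled Wishart fluctuation, $\m_{k}=\frac{1}{n}\vec{\Q_{k}^{1/2}\(\sum_{i=1}^{n}\z_{i}^{k}\z_{i}^{kT}-n\I_{p}\)\Q_{k}^{1/2}}$, so $\norm{\M}_{2}$ is a smooth function of the standard Gaussian vector $\(\z_{i}^{k}\)_{i,k}$.

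For $\mathbb{E}\norm{\M}_{2}$ I would invoke a Chevet--Gordon-type operator-norm bound for a matrix with independent columns, $\mathbb{E}\norm{\M}_{2}\le\sqrt{\norm{\mathbb{E}[\M\M^{T}]}_{2}}+\sqrt{\norm{\mathbb{E}[\M^{T}\M]}_{2}}+\text{(lower order)}$, and compute the two Gram terms from the second and fourth moments of Gaussian outer products. On the one hand $\mathbb{E}[\M^{T}\M]$ is the diagonal $K\times K$ matrix with entries $\mathbb{E}\norm{\S_{k}-\Q_{k}}_{F}^{2}=\frac{1}{n}\(\Tr{\Q_{k}^{2}}+(\Tr{\Q_{k}})^{2}\)$, so $\norm{\mathbb{E}[\M^{T}\M]}_{2}\le\frac{p^{2}+p}{n}\max_{k}\norm{\Q_{k}}_{2}^{2}=\frac{p^{2}+p}{n\underline\lambda^{2}}$. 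On the other hand $\mathbb{E}[\M\M^{T}]=\sum_{k}\cov(\m_{k})$, and the covariance of a vectorised Gaussian rank-one matrix has spectral norm at most twice the squared spectral norm of the underlying covariance, whence $\norm{\cov(\m_{k})}_{2}\le\frac{2}{n}\norm{\Q_{k}}_{2}^{2}$ and $\norm{\mathbb{E}[\M\M^{T}]}_{2}\le\frac{2K}{n\underline\lambda^{2}}$. This already yields the two advertised summands, $\mathbb{E}\norm{\M}_{2}\lesssim\frac{1}{\underline\lambda\sqrt{n}}\(\sqrt{2K}+\sqrt{p^{2}+p}\)$.

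To control the deviation of $\norm{\M}_{2}$ above its mean I would use Gaussian concentration of measure. On the event $\mathcal{E}=\{\sum_{i,k}\norm{\z_{i}^{k}}^{2}\le 2nKp\}$, which fails with probability at most $e^{-cnKp}$, a short gradient computation (the gradient of $\norm{\M}_{2}$ in the direction $\z_{i}^{k}$ is proportional to $\Q_{k}^{1/2}M\Q_{k}^{1/2}\z_{i}^{k}$, where $M=\sum_{k'}v_{k'}(\S_{k'}-\Q_{k'})$ is the maximising signed combination) shows that $\norm{\M}_{2}$ is Lipschitz in $\(\z_{i}^{k}\)_{i,k}$ with constant $O\!\(\sqrt{p}/(\underline\lambda\sqrt{n})\)$; hence $\Pr{\norm{\M}_{2}>\mathbb{E}\norm{\M}_{2}+t}\le 2e^{-c\,n\underline\lambda^{2}t^{2}/p}+e^{-cnKp}$, and taking $t\asymp p/(\underline\lambda\sqrt{n})=\sqrt{p^{2}}/(\underline\lambda\sqrt{n})$ makes the right-hand side at most $2e^{-c_{3}p}$. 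Since this deviation is of the same order as the $\sqrt{p^{2}+p}$ term, combining the two steps gives $\norm{\S-\Q}_{2}\le\frac{1}{\underline\lambda\sqrt{n}}\(\sqrt{2K}+c_{4}\sqrt{p^{2}}\)$ with probability at least $1-2e^{-c_{3}p}$.

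The main obstacle is the expectation bound, that is, extracting the clean $\sqrt{2K}$ leading term without spurious logarithmic or ambient-dimension factors, since $\M$ is not Gaussian but has heavier-tailed, quadratic-in-Gaussian columns. A naive $\varepsilon$-net over $S^{K-1}$ combined with the (unavoidably sub-exponential) concentration of the maps $v\mapsto\norm{\sum_{k}v_{k}(\S_{k}-\Q_{k})}_{F}$ is too lossy once $K\gg n$; the right path is either a sharp operator-norm inequality for matrices with independent columns, or the decomposition $\M^{T}\M=\D+\E$ in which $\D=\diag{\norm{\m_{1}}^{2},\dots,\norm{\m_{K}}^{2}}$ is handled by Wishart Frobenius-norm fluctuations plus a union bound over the $K$ groups (giving $\max_{k}\norm{\m_{k}}^{2}\lesssim p^{2}/(n\underline\lambda^{2})$, hence the $\sqrt{p^{2}}$ contribution), and $\E$, the hollow part with mean-zero entries $\E_{jk}=\m_{j}^{T}\m_{k}$ of variance $O\!\((p^{2}+p)/(n\underline\lambda^{2})^{2}\)$, is bounded by a moment or decoupling argument for bilinear forms in independent vectors (giving the $\sqrt{2K}$ contribution, after using $K\ge l$ to absorb the $p$-dependence). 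Bookkeeping the absolute constants so that the leading coefficient is exactly $\sqrt{2}$, and verifying that $n\ge p$ suffices throughout, is then routine.
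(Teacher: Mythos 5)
First, note that the paper does not actually prove this statement: Lemma~\ref{r_lem} is imported verbatim as ``Lemma 4 from \cite{soloveychik2015joint}'' and no argument is given here, so your proposal cannot be matched against an in-paper proof --- it has to stand on its own. Its skeleton (bound $\mathbb{E}\norm{\M}_2$ via the two Gram matrices, then concentrate) is sensible, and your moment computations are correct: $\mathbb{E}[\M^T\M]$ is diagonal with entries $\frac{1}{n}\(\Tr{\Q_k^2}+(\Tr{\Q_k})^2\)\leqslant\frac{p^2+p}{n\underline\lambda^2}$, and $\norm{\cov(\m_k)}_2\leqslant\frac{2}{n}\norm{\Q_k}_2^2$ gives $\norm{\mathbb{E}[\M\M^T]}_2\leqslant\frac{2K}{n\underline\lambda^2}$, which is exactly where the two advertised summands come from.

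The genuine gap is the step you yourself flag as ``the main obstacle'': the inequality $\mathbb{E}\norm{\M}_2\lesssim\sqrt{\norm{\mathbb{E}[\M\M^T]}_2}+\sqrt{\norm{\mathbb{E}[\M^T\M]}_2}$ is a Gaussian (Chevet/Gordon) fact and does not hold for a general matrix with independent centered columns; your columns are Wishart fluctuations, i.e.\ quadratic (sub-exponential) in Gaussians, and the standard substitutes (matrix Bernstein, non-commutative Khintchine) introduce logarithmic factors or additional terms controlled by $\max_k\norm{\m_k}$. Your fallback --- $\M^T\M=\D+\E$ with $\D$ handled by a union bound and $\E$ by ``a moment or decoupling argument'' --- is the right architecture (it is essentially the independent-sub-exponential-columns sample-covariance argument), but the control of $\norm{\E}_2$ for a hollow matrix whose entries $\m_j^T\m_k$ are dependent bilinear forms is precisely the technical core of the lemma and is only named, not executed; as your own accounting shows, the crude bound $\norm{\E}_2\leqslant\norm{\E}_F\lesssim Kp/(n\underline\lambda^2)$ is too lossy, so a Wigner-type $\sqrt{K}$ bound with explicit constants is genuinely needed. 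Two secondary issues: (i) Gaussian concentration applies to globally Lipschitz functions, and $\norm{\M}_2$ is neither globally Lipschitz nor convex in $(\z_i^k)$, so the restriction to the event $\mathcal{E}$ must be accompanied by a Lipschitz-extension or truncation argument, not just a pointwise gradient bound; (ii) the union bound over the $K$ groups (both in $\max_k\norm{\m_k}^2$ and in the per-group spectral estimates) costs a factor $K$ in the failure probability, which is not absorbed into $1-2e^{-c_3p}$ with constants independent of $K$ unless one additionally assumes $\log K\lesssim p$ or reworks the deviation level. Until $\norm{\E}_2$ and these two points are pinned down, the proposal is a plausible roadmap rather than a proof.
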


Recall that $p^2 = 2l + o(l),\; l \to \infty$ to get an immediate corollary of Theorem \ref{jice_b}.
\begin{corollary}
\label{jice_cor}
If $n \geqslant p$ and $\eta \asymp \frac{\norm{\S-\Q}_2}{K}$, with high probability
\begin{equation}
\norm{\widehat{\Y} - \Y}_F \leqslant \frac{4\overline\lambda^2\sqrt{2r}}{\underline\lambda\sqrt{n}(1-\sqrt{(p-1)/n})^4}\(\sqrt{2K} +  C\sqrt{l}\).
\end{equation}
\end{corollary}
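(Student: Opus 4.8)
The plan is to obtain Corollary~\ref{jice_cor} by substituting the bound of Lemma~\ref{r_lem} into Theorem~\ref{jice_b} and then trading the dependence on $p$ for one on $l$. Write $\mathcal{E}_1$ for the event of Lemma~\ref{r_lem}, on which $\norm{\S-\Q}_2 \leqslant \frac{1}{\underline\lambda\sqrt{n}}\(\sqrt{2K}+c_4\sqrt{p^2}\)$, and $\mathcal{E}_2$ for the event of Theorem~\ref{jice_b}. A union bound merges the two into an event $\mathcal{E}_1\cap\mathcal{E}_2$ of probability tending to one in the regime $n\geqslant p$, $p,K\to\infty$, which is exactly the ``with high probability'' of the statement; all the estimates below are performed on $\mathcal{E}_1\cap\mathcal{E}_2$.

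First I would fix the regularization parameter. The hypothesis $\eta\asymp\frac{\norm{\S-\Q}_2}{K}$, read together with the requirement $\eta\geqslant\frac{\norm{\S-\Q}_2}{K}$ of Theorem~\ref{jice_b}, means $\frac{\norm{\S-\Q}_2}{K}\leqslant\eta\leqslant\frac{c}{K}\norm{\S-\Q}_2$ for a fixed $c\geqslant 1$; so Theorem~\ref{jice_b} is in force, and on $\mathcal{E}_1$,
\begin{equation*}
K\eta \;\leqslant\; c\,\norm{\S-\Q}_2 \;\leqslant\; \frac{c}{\underline\lambda\sqrt{n}}\(\sqrt{2K}+c_4\sqrt{p^2}\).
\end{equation*}
Substituting into the conclusion of Theorem~\ref{jice_b} gives, on $\mathcal{E}_1\cap\mathcal{E}_2$,
\begin{equation*}
\norm{\widehat{\Y}-\Y}_F \;\leqslant\; \frac{4\overline\lambda^2\sqrt{2r}}{c_2\(1-\sqrt{(p-1)/n}\)^4}\cdot\frac{c}{\underline\lambda\sqrt{n}}\(\sqrt{2K}+c_4\sqrt{p^2}\).
\end{equation*}

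It remains to tidy up. Using $p^2 = 2l+o(l)$ as $l\to\infty$, for all sufficiently large $l$ we have $c_4\sqrt{p^2}\leqslant C\sqrt{l}$ for a constant $C$ absorbing $c_4$, the $\sqrt 2$, and the fixed ratio $c/c_2$ inherited from Theorem~\ref{jice_b} and the $\asymp$-constant; the $\sqrt{2K}$ term retains the prefactor of Theorem~\ref{jice_b} up to that same harmless ratio, which we fold into the constant already present there. Collecting terms yields precisely the displayed inequality. I do not expect any real obstacle: the argument is essentially a one-line substitution followed by bookkeeping. The only points deserving care are (i) matching constants so that the clean factor $4$ is kept while $c_2$, $c_4$ and the $\asymp$-constant are pushed into $C$, and (ii) the fact --- taken up separately in the regularization-parameter discussion --- that the $\eta$ satisfying these conditions still involves the unobservable scalar $\underline\lambda$, so it is ``computable from the data'' only up to that factor.
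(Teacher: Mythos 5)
Your proposal is correct and follows exactly the route the paper intends: the paper presents this as an ``immediate corollary'' obtained by substituting the bound of Lemma~\ref{r_lem} into Theorem~\ref{jice_b}, taking a union bound over the two high-probability events, and invoking $p^2 = 2l + o(l)$ to replace $\sqrt{p^2}$ by $C\sqrt{l}$. Your additional bookkeeping remarks (folding $c_2$, $c_4$ and the $\asymp$-constant into $C$, and noting that the $\sqrt{2K}$ term strictly also carries such a factor) accurately identify the only looseness in the paper's stated constants.
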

We believe that in all the obtained results the requirement $n \geqslant p$ is too strong and can be relaxed. We are currently pursuing a way to make this intuition a precise proof. 

Apparently, in the real world environment the matrix $\Q$ is not known and we need to compute the value of $\eta$ from the available data. Note that 
\begin{equation}
\underline\lambda = \min_k\lambda_p(\T_k) = \min_k\frac{1}{\norm{\Q_k}_2} = \frac{1}{\max_k \norm{\Q_k}_2},
\end{equation} 
and can naturally be estimated as
\begin{equation}
\widehat{\underline\lambda} = \frac{\(1+\sqrt{\frac{p}{n}}\)^2}{\max_k \norm{\S_k}_2},
\label{lamb}
\end{equation}
where the last relation comes from Theorem II.13 from \cite{davidson2001local}. Now Lemma \ref{r_lem} and Corollary \ref{jice_cor} together with (\ref{lamb}) suggest a natural choice of the regularization parameter value
\begin{equation}
\eta = \frac{\max_k\norm{\S_k}_2}{\sqrt{n}K(1+\sqrt{\frac{p}{n}})^2}\(\sqrt{2K} +  \sqrt{l}\).
\label{reg_cho}
\end{equation}
Below we numerically demonstrate the performance characteristics of the JICE algorithm with this choice of $\eta$.

\section{Numerical Simulations}
\subsection{Circulant Model}
For our first numerical experiment we took the circulant model. An important property of circulant matrices is that they are diagonalized by a Digital Fourier Transform (DFT) basis and, therefore, the structure is preserved under the inversion which we utilize below. In order for a real covariance matrix $\Q$ to be circulant, it should have an odd dimension $p$ and its spectrum must be of the form
\begin{equation}
\text{spec}(\Q) = [\xi_{(p+1)/2}, \xi_1,\dots, \xi_{(p-1)/2}, \xi_{(p-1)/2}, \dots, \xi_1],
\label{cir_spec}
\end{equation}
with arbitrary positive $\xi_i$-s and the order corresponding to the standard ordering of the DFT basis vectors (so that the same eigenvalues correspond to the conjugated eigenvectors). In our experiment we have generated $K=50$ random i.i.d. positive definite circulant matrices as
\begin{equation}
\Q_k = 2\I + \textbf{DFT}\;\text{spec}(\Q_k)\;\textbf{DFT}^H,
\end{equation}
where the spectrum $\text{spec}(\Q_k)$ was generated according to (\ref{cir_spec}) with $\xi_i$ i.i.d uniform over the interval $[-1,1]$ and $\textbf{DFT}$ is the unitary DFT basis matrix of dimension $p$. Figure \ref{perf_n} shows the empirical, averaged over $1000$ experiments, $\mathbf{MSE}$-s of a few estimators as functions of the number of samples $n$ in groups. The TSVD curve demonstrates the performance of the Truncated SVD after inversion (or pseudoinversion) of the group sample covariances, as proposed in the beginning of section \ref{jice_alg} and follows the development in \cite{soloveychik2015joint} applied to inverse covariances. The truncation threshold for the SVD was chosen according to the rule described in \cite{soloveychik2015joint} in such a way, that the truncated signal would convey $90\%$ of the original power. Our JICE algorithm and its bias removing modifications are named JICE and JICE\_BR, respectively. For comparison we also plot the $\mathbf{MSE}$-s of the inverse SCM (ISCM), its projection onto the known structure (Projection) and the $\mathbf{CRB}$ bound calculated in (\ref{crb_e}). The graph demonstrates that when the number of samples $n$ in each of the groups is relatively small, both JICE and JICE\_BR significantly outperform the competitors. In addition, as expected, the bias removal step improves the performance significantly and is quite close to the $\mathbf{CRB}$.

The main goal of our next simulation study was to examine how good is the choice of the regularization parameter given by (\ref{reg_cho}). For this purpose in Figure \ref{perf_n_cc} we compare the performances of the JICE\_BR algorithm in the same circulant setting with the benchmark (JICE\_BR-best). To get the JICE\_BR-best curve we took a dense grid of regularization parameters $\mathcal{Z} = \frac{1}{\sqrt{nK}}[0.05,\; 0.1,\; 0.2,\;\dots,10,\;25,\;50]$ and, given the true covariances, chose the best $\eta_b \in \mathcal{Z}$ producing the smallest $\mathbf{MSE}$. These $\mathbf{MSE}$ as functions of $n$ are depicted in the figure. As the Figure clearly suggests, the theoretically derived in (\ref{reg_cho}) value of the regularization parameter demonstrates very good performance and can be readily used in applications.

\begin{figure}[!t]
\subfloat[]{\label{perf_n}\includegraphics[height=3.4in]{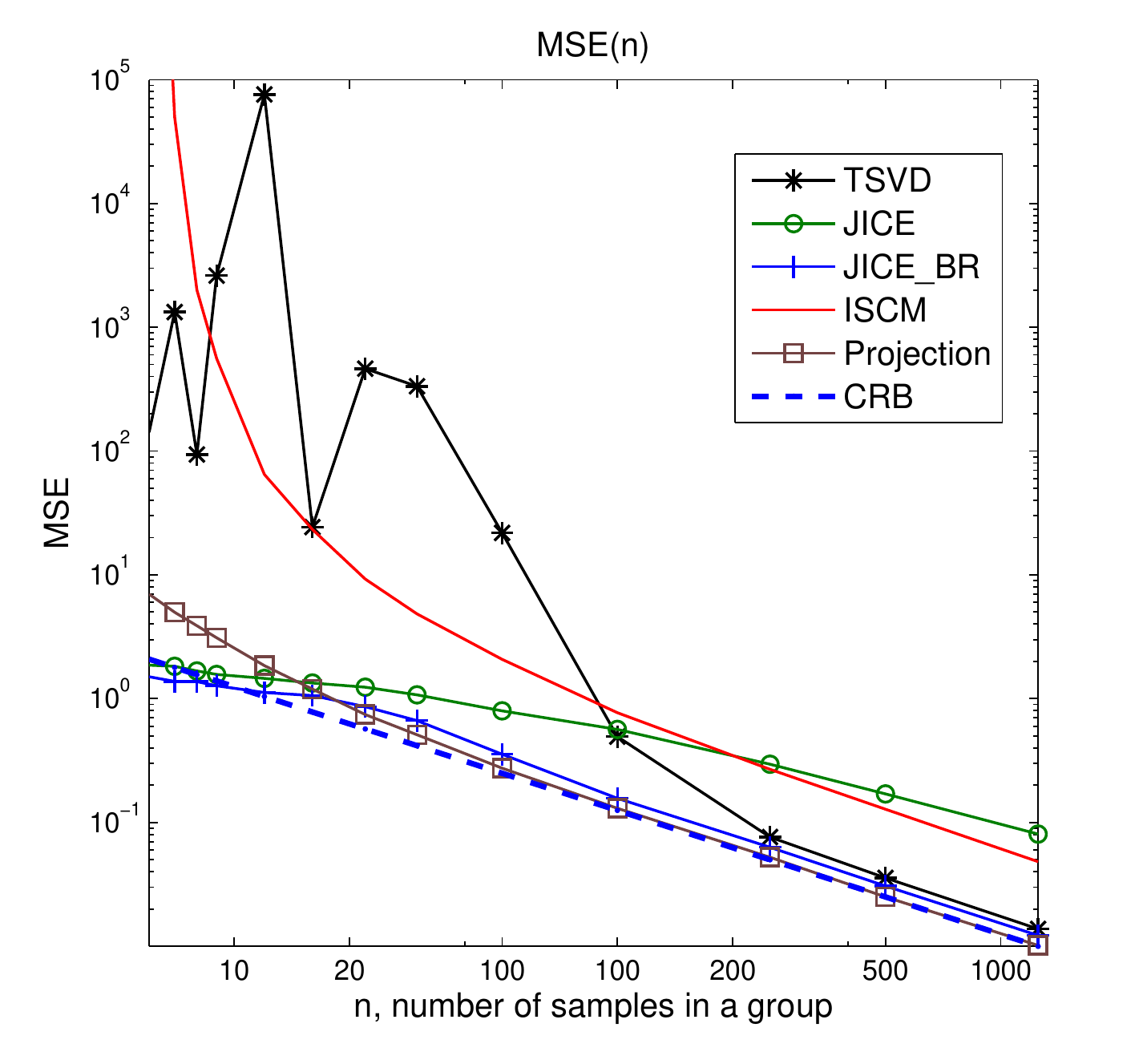}} \vspace{-0.1 cm}\\
\subfloat[]{\label{perf_n_cc}\includegraphics[height=3.4in]{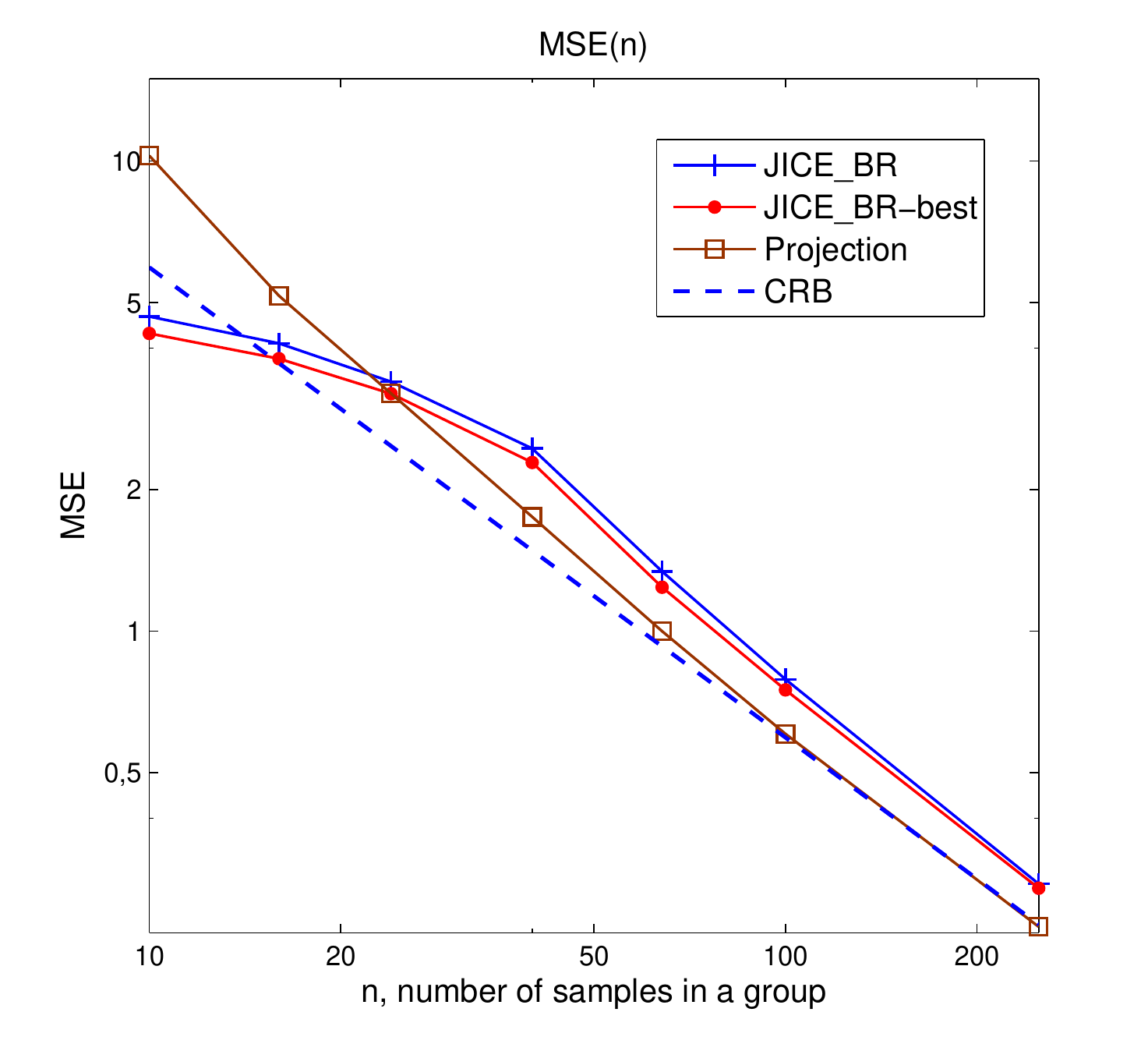}} \vspace{-0.1 cm}
\caption{JICE algorithm performance in the circulant structure case, $p=5,\; l = 15,\; r = 5,\; K = 32$.}
\end{figure}

\subsection{Graphical Models}
\begin{figure}
\centering
\includegraphics[height=3.4in]{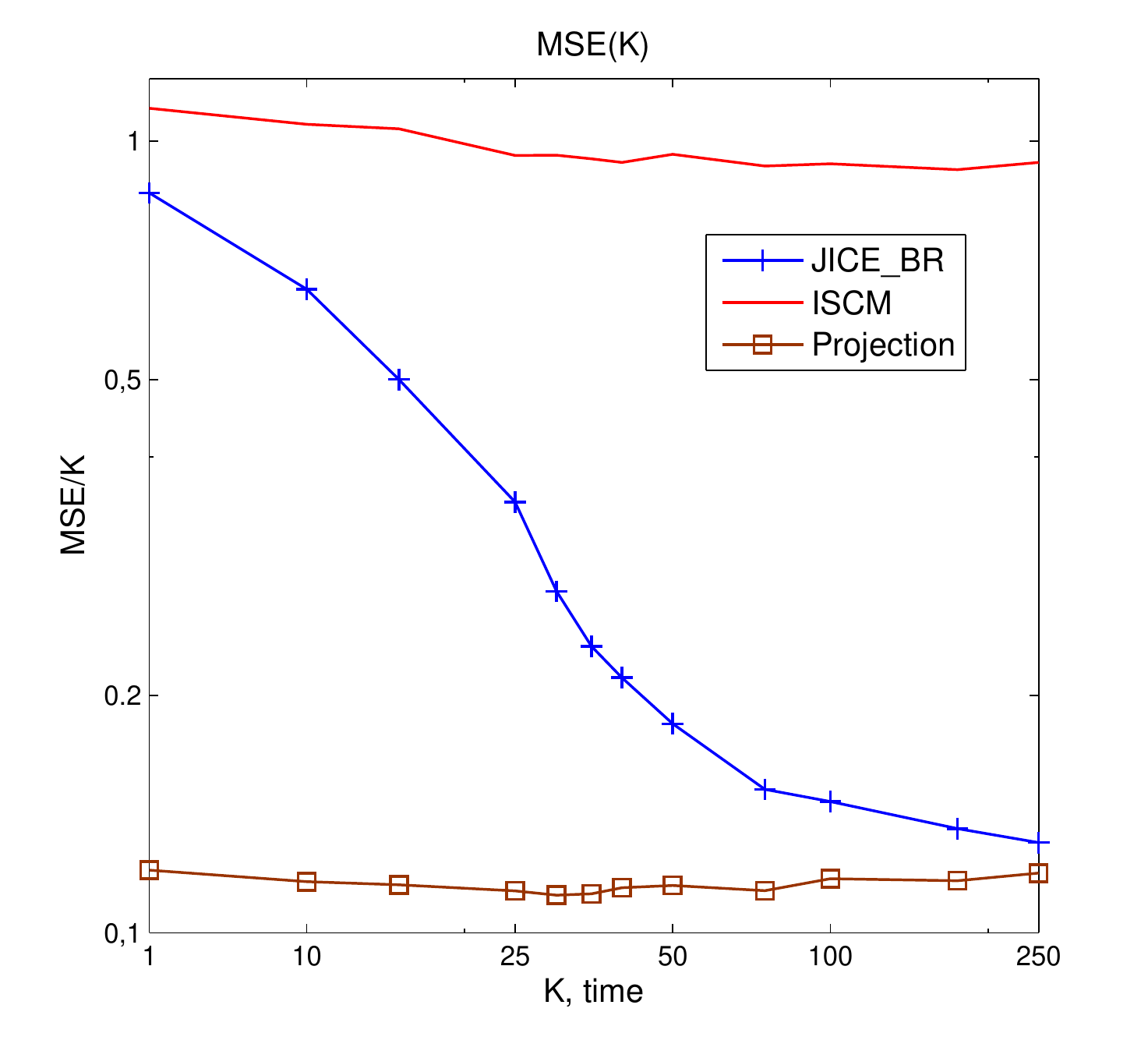}
\caption{Time-tracking learning process in Graphical Models, $p=5,\; b=1,\; l = 15,\; r = 9,\; n=40$.}
\label{perf_k}
\vspace{-0.5 cm}
\end{figure}

The goal of this section is to examine the learning abilities of the JICE technique in Gaussian Graphical Models. The setting of the simulation identifies every other group of measurements $k=1,\dots,K$ with data coming with the successive clock ticks. The inverse covariances of the groups were generated by the random process
\begin{equation}
\bar\T_t = \T_{t-1} + \alpha \bm\Delta_t,\quad \T_t = \frac{\bar\T_t}{\norm{\bar\T_t}},\quad t = 0, 1, \dots, K,
\end{equation}
where $\alpha$ is a small weight and $\bm\Delta_t$ is such a random matrix preserving the sparsity pattern that $\T_t \succ \epsilon \I,\; t = 0, 1, \dots, K$. Specifically, we fixed $p=5$ and chose the inverse covariances to be symmetric positive definite tridiagonal matrices with the smallest eigenvalue not less than $\varepsilon = 0.1$. The data generating algorithm worked as follows: on the fist step we generated symmetric tridiagonal $\T_0$ with uniformly over $[-1,1]$ distributed entries and checked if $\lambda_p(\T_0) \geqslant \varepsilon$, if this condition was violated, we repeated this step until it was met. Then on each time count $t$ we performed a similar procedure: we took symmetric tridiagonal $\bm\Delta_t$ with uniformly randomly over $[-1,1]$ entries and checked if $\lambda_p(\T_t) \geqslant \varepsilon$. If the last condition was not fulfilled iteration was repeated until it was satisfied. In our experiment $\alpha$ was taken to be $0.25$. The number of degrees of freedom in this model is $r = 9$ and the ambient dimension is $l = 15$. The number of measurements in each group was fixed to be $n=40$. Figure \ref{perf_k} shows that the JICE\_BR learns the structure with the time and effectively applies this knowledge in the estimation. All the results were averaged over $100$ iterations.

\section{Conclusion}
In this paper we consider the problem of joint inverse covariance estimation with linear structure, given heterogeneous measurements. The main challenge in this scenario is twofold. At first, the underlying structure is to be discovered and then it should be utilized to improve the concentrations estimation. We propose a novel algorithm coupling these two stages into one optimization program and propose its efficient implementation. The main aim of our current research is to provide tight upper bound guarantees on the performance of the proposed technique.

\appendix
\begin{proof}[Proof of Theorem \ref{jice_b}]
To simplify the proof, without loss of generality, we assume that $\Y$ and all its analogs are formed by $\vec{\T_k}$, rather than $\vech{\T_k}$.

Consider the first order Taylor's expansion of the target (\ref{main_alg}) with the remainder in Lagrange form in the vicinity of $\Y$
\begin{align}
\Delta f &= f(\Y+\Delta\Y) - f(\Y) \nonumber\\
&= \nabla_{\Y}f(\Delta\Y) + \frac{1}{2}\nabla^2_{\bar{\Y}}f(\Delta\Y) + \eta\Delta\norm{\Y}_*,
\label{taylor}
\end{align}
where the derivatives are treated as linear and quadratic forms correspondingly, their subscripts denote the point of evaluation, and
\begin{align}
\Delta\Y &= [\vec{\Delta\T_1},\dots,\vec{\Delta\T_K}], \\
\bar{\Y} &= \Y+\alpha\Delta\Y,\;\; \alpha \in [0,1], \\
\Delta\norm{\Y}_* &= \norm{\Y+\Delta\Y}_*-\norm{\Y}_*.
\end{align}
Compute the derivatives
\begin{align}
\nabla_{\Y}f(\Delta\Y) &= \frac{1}{K}\sum_{k=1}^K \Tr{(\S_k-\T_k^{-1})\Delta\T_k} \nonumber \\ 
&= \frac{1}{K}\Tr{(\S-\Q)^T\Delta\Y}, \\
\nabla^2_{\bar{\Y}}f(\Delta\Y) &= \frac{1}{K}\sum_{k=1}^K \Tr{\[(\T_k+\alpha\Delta\T_k)^{-1}\Delta\T_k\]^2}.
\label{ders}
\end{align}
Consider the target function over the set satisfying
\begin{equation}
\lambda_p\((\T_k+\alpha\Delta\T_k)^{-1}\) \geqslant c_5\lambda_p(\S_k),\; k=1,\dots,K.
\end{equation}
Theorem 1.1 from \cite{rudelson2009smallest} claims that when $n \geqslant p$, with probability at least $1-2^{-n+p-1}-e^{-c_1n}$,
\begin{multline}
\lambda_p(\S_k) \geqslant c_6\lambda_p(\Q_k)\frac{(\sqrt{n}-\sqrt{p-1})^2}{n} \\ \geqslant \frac{c_6}{\overline\lambda}\(1-\sqrt{\frac{p-1}{n}}\)^2,
\label{lob}
\end{multline}
where $c_1$ and $c_6$ are universal constants. The union bound implies that with probability at least $1-K\(2^{-n+p-1}-e^{-c_1n}\)$ (\ref{lob}) holds simultaneously for all $k$ and, therefore, after denoting $c_2 = c_5c_6$, with the same probability
\begin{equation}
\lambda_p\((\T_k+\alpha\Delta\T_k)^{-1}\) \geqslant \frac{c_2}{\overline\lambda}\(1-\sqrt{\frac{p-1}{n}}\)^2.
\label{tbo}
\end{equation}
over the set under investigation. Denote 
\begin{equation}
\zeta = \frac{c_2}{\overline\lambda}\(1-\sqrt{\frac{p-1}{n}}\)^2,
\end{equation}
and plug (\ref{tbo}) into (\ref{ders}) to get
\begin{equation}
\nabla^2_{\bar{\Y}}f(\Delta\Y) \geqslant \frac{\zeta^2}{K}\sum_{k=1}^K \norm{\Delta\T_k}_F^2 = \frac{\zeta^2}{K}\norm{\Delta\Y}_F^2.
\end{equation}
At the optimal point $\widehat{\Y}$, necessarily $\Delta f \leqslant 0$, thus 
\begin{align}
\frac{1}{K}\Tr{(\S-\Q)^T\Delta\Y} + \frac{\zeta^2}{2K}\norm{\Delta\Y}_F^2 + \eta\Delta\norm{\Y}_* \leqslant 0,
\label{df_in}
\end{align}
or
\begin{equation}
\frac{\zeta^2}{2}\norm{\Delta\Y}_F^2 \leqslant - K\eta\Delta\norm{\Y}_* - \Tr{(\S-\Q)^T\Delta\Y}.
\label{iin}
\end{equation}

Introduce the SVD of $\Y$
\begin{equation}
\Y = \U\bm\Sigma\V^T = [\U_r \U_r^\perp]\bm\Sigma[\V_r \V_r^\perp]^T,
\end{equation}
where $\U\in\mathbb{R}^{p^2 \times p^2}, \V\in\mathbb{R}^{K \times K}$ are orthogonal, $\U_r\in\mathbb{R}^{p^2 \times r}, \V_r\in\mathbb{R}^{K \times r}$ and the upper left part of the diagonal of $\bm\Sigma \in \mathbb{R}^{p^2 \times K}$ contains the $r$ non zero singular values of $\Y$. Let
\begin{equation}
\bm\Gamma = \U^T\Delta\Y\V = \begin{pmatrix}
  \bm\Gamma_{11} & \bm\Gamma_{12}\\
  \bm\Gamma_{21} & \bm\Gamma_{22}\\
  \end{pmatrix},
\end{equation}
where $\bm\Gamma_{11} \in \mathbb{R}^{r \times r}$ and define
\begin{equation}
\Delta\Y'' = \U \begin{pmatrix}
  0 & 0\\
  0 & \bm\Gamma_{22}\\
  \end{pmatrix}\V^T,
\label{d1}
\end{equation}
\begin{equation}
\Delta\Y' = \Delta\Y - \Delta\Y''.
\label{d2}
\end{equation}

\begin{lemma} 
\label{lem_au1}
With the notation (\ref{d1}), (\ref{d2}),
\begin{equation}
\label{lemst}
- \Delta\norm{\Y}_* \leqslant \norm{\Delta\Y'}_* - \norm{\Delta\Y''}_*.
\end{equation} 
\end{lemma}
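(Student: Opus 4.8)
The plan is to invoke the by-now standard decomposability property of the nuclear norm, exactly as in low-rank matrix recovery; (\ref{lemst}) is the matrix analogue of the elementary inequality $\norm{x}_1-\norm{x+h}_1\leqslant\norm{h_S}_1-\norm{h_{S^c}}_1$ from compressed sensing, with the support set $S$ replaced by the pair consisting of the column space and the row space of $\Y$. The only point that needs genuine care is verifying the block structure of $\Delta\Y''$ forced by (\ref{d1}); everything else is two lines of triangle inequality.

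First I would record that, by the definition $\bm\Gamma=\U^T\Delta\Y\V$ together with (\ref{d1}), the matrix $\Delta\Y''=\U_r^\perp\bm\Gamma_{22}(\V_r^\perp)^T$ has column space contained in $\spann{\U_r^\perp}$ and row space contained in $\spann{\V_r^\perp}$, hence both are orthogonal to the corresponding spaces of $\Y=\U_r\bm\Sigma_r\V_r^T$ (the reduced SVD, with $\bm\Sigma_r$ collecting the $r$ nonzero singular values). Consequently
$$\Y+\Delta\Y''=[\U_r\;\U_r^\perp]\begin{pmatrix}\bm\Sigma_r & 0\\ 0 & \bm\Gamma_{22}\end{pmatrix}[\V_r\;\V_r^\perp]^T,$$
and since $[\U_r\;\U_r^\perp]$ and $[\V_r\;\V_r^\perp]$ are orthogonal matrices, the singular values of $\Y+\Delta\Y''$ form the multiset union of those of $\bm\Sigma_r$ and those of $\bm\Gamma_{22}$. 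As $\Delta\Y''$ is obtained from $\bm\Gamma_{22}$ by multiplication on the left and right by matrices with orthonormal columns, $\norm{\bm\Gamma_{22}}_*=\norm{\Delta\Y''}_*$, and therefore $\norm{\Y+\Delta\Y''}_*=\norm{\Y}_*+\norm{\Delta\Y''}_*$.

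Second, I would apply the triangle inequality to the splitting $\Delta\Y=\Delta\Y''+\Delta\Y'$ from (\ref{d2}):
$$\norm{\Y+\Delta\Y}_*=\norm{(\Y+\Delta\Y'')+\Delta\Y'}_*\geqslant\norm{\Y+\Delta\Y''}_*-\norm{\Delta\Y'}_*=\norm{\Y}_*+\norm{\Delta\Y''}_*-\norm{\Delta\Y'}_*.$$
Rearranging and using $\Delta\norm{\Y}_*=\norm{\Y+\Delta\Y}_*-\norm{\Y}_*$ gives $-\Delta\norm{\Y}_*=\norm{\Y}_*-\norm{\Y+\Delta\Y}_*\leqslant\norm{\Delta\Y'}_*-\norm{\Delta\Y''}_*$, which is (\ref{lemst}). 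I do not expect a real obstacle here: once the orthogonality of the row and column spaces of $\Y$ and $\Delta\Y''$ is observed, the additivity of the nuclear norm and one triangle inequality finish the argument; that orthogonality is automatic from the construction of $\Delta\Y''$ in (\ref{d1}).
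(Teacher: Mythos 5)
Your proposal is correct and follows essentially the same route as the paper's proof: additivity of the nuclear norm for $\Y$ and $\Delta\Y''$ (which have mutually orthogonal column and row spaces by the construction in (\ref{d1})), followed by one triangle inequality and a rearrangement. The only difference is cosmetic --- you spell out the block-diagonal justification of $\norm{\Y+\Delta\Y''}_* = \norm{\Y}_* + \norm{\Delta\Y''}_*$, which the paper states without elaboration.
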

\begin{proof}
The proof can be found below.
\end{proof}

Plug (\ref{lemst}) into (\ref{iin}) and use Holder's inequality
\begin{equation}
\Tr{(\S-\Q)^T\Delta\Y} \leqslant \norm{\S-\Q}_2\norm{\Delta\Y}_*,
\end{equation}
to obtain
\begin{multline}
\frac{\zeta^2}{2}\norm{\Delta\Y}_F^2 \leqslant - K\eta\Delta\norm{\Y}_* - \Tr{(\S-\Q)^T\Delta\Y} \\
\leqslant K\eta(\norm{\Delta\Y'}_* - \norm{\Delta\Y''}_*) + \norm{\S-\Q}_2(\norm{\Delta\Y'}_* + \norm{\Delta\Y''}_*),
\end{multline}
where in the last transition we have applied the triangle inequality. Recall that $\eta \geqslant \norm{\S-\Q}_2/K$ to obtain 
\begin{equation}
\frac{\zeta^2}{2}\norm{\Delta\Y}_F^2 \leqslant 2K\eta\norm{\Delta\Y'}_*.
\end{equation}
Note that
\begin{align}
&\rank(\Delta\Y') = \rank{\begin{pmatrix}
  \bm\Gamma_{11} & \bm\Gamma_{12}\\
  \bm\Gamma_{21} & 0\\
  \end{pmatrix}} \nonumber\\
&\leqslant \rank{\begin{pmatrix}
  \bm\Gamma_{11} & \bm\Gamma_{12}\\
  0 & 0\\
  \end{pmatrix}} + \rank{\begin{pmatrix}
  \bm\Gamma_{11} & 0\\
  \bm\Gamma_{21} & 0\\
  \end{pmatrix}} \leqslant 2r,
\label{rankin}
\end{align}
and use the following norm relation
\begin{equation}
\norm{\Delta\Y'}_* \leqslant \sqrt{\rank(\Delta\Y')}\norm{\Delta\Y'}_F,
\end{equation} 
together with the decomposition (\ref{d1})-(\ref{d2}) to get
\begin{multline}
\frac{\zeta^2}{2}\norm{\Delta\Y}_F^2 \leqslant 2K\eta\norm{\Delta\Y'}_* \\ \leqslant 2K\eta\sqrt{2r}\norm{\Delta\Y'}_F \leqslant 2\sqrt{2r}K\eta\norm{\Delta\Y}_F.
\end{multline}
We finally achieve the desired bound
\begin{equation}
\norm{\Delta\Y}_F \leqslant \frac{4\sqrt{2r}K\eta}{\zeta^2} = \frac{4\overline\lambda^2\sqrt{2r}K\eta}{c_2^2(1-\sqrt{(p-1)/n})^4}.
\end{equation}
\end{proof}

\begin{proof}[Proof of Lemma \ref{lem_au1}]
\begin{align}
\norm{\Y + \Delta\Y''}_* &= \norm{\U_r\U_r^T\Y\V_r\V_r^T + \Delta\Y''}_* \nonumber\\
&= \norm{\Y}_* + \norm{\Delta\Y''}_*.
\end{align}
Together with the triangle inequality this yield
\begin{align}
\norm{\Y+\Delta\Y}_* &= \norm{\(\Y+\Delta\Y''\) + \Delta\Y'}_* \nonumber\\
&\geqslant \norm{\Y+\Delta\Y''}_* - 
\norm{\Delta\Y'}_* \nonumber\\
&\geqslant \norm{\Y}_*+ \norm{\Delta\Y''}_* -\norm{\Delta\Y'}_*,
\end{align}
and, finally,
\begin{equation}
- \Delta\norm{\Y}_* = - \norm{\Y+\Delta\Y}_* + \norm{\Y}_* \leqslant \norm{\Delta\Y'}_* - \norm{\Delta\Y''}_*.
\label{aui}
\end{equation}
\end{proof}

\bibliographystyle{IEEEtran}
\bibliography{icls_j.bbl}

\end{document}